\newcounter{docpart}
\newcounter{olddocpart}
\newcites{app}{References}
\DeclareMathOperator*{\argmax}{argmax}
\def\etc{\emph{etc}}
\def\etal{\emph{et al.}}
\def\eg{\emph{e.g.}}
\def\ie{\emph{i.e.}}
\def\trans{^{\rm T}}
\def\calL{\mathcal{L}}
\def\calS{\mathcal{S}}
\def\f{\mathbf{f}}
\def\x{\mathbf{x}}
\newtheorem{observation}{Observation}
\definecolor{Gray}{gray}{0.85}
\definecolor{LightCyan}{rgb}{0.88,1,1}
\newcommand{\tabincell}[2]{\begin{tabular}{@{}#1@{}}#2\end{tabular}}
\begin{document}
\title{Domain Agnostic Learning for Unbiased Authentication}

\author{
\IEEEauthorblockN{Jian Liang, Yuren Cao, Shuang Li, Bing Bai, Hao Li, Fei Wang, Kun Bai}

\IEEEcompsocitemizethanks{\IEEEcompsocthanksitem J. Liang is with AI for international Department, Alibaba Group,
Beijing, 100102, China.\protect\\
E-mail: liangjianzb12@gmail.com.
\IEEEcompsocthanksitem Y. Cao is with the Cloud and Smart Industries Group, Tencent, Beijing, 100089, China.\protect\\
E-mail: laurenyrcao@tencent.com.
\IEEEcompsocthanksitem S. Li is with the school of Computer Science and Technology, Beijing Institute of Technology, Beijing, 100081, China.\protect\\
E-mail: shuangli@bit.edu.cn.
\IEEEcompsocthanksitem B. Bai is with the Cloud and Smart Industries Group, Tencent, Beijing, 100089, China.\protect\\
E-mail: baibing12321@163.com.
\IEEEcompsocthanksitem H. Li is with the Cloud and Smart Industries Group, Tencent, Beijing, 100089, China.\protect\\
E-mail: leehaoli@tencent.com.
\IEEEcompsocthanksitem F. Wang is with the Department of Population Health Sciences, Weill Cornell Medical College, New York, NY, 10065, USA.\protect\\
E-mail: few2001@med.cornell.edu.
\IEEEcompsocthanksitem K. Bai is with the Cloud and Smart Industries Group, Tencent, Guangzhou, 510630, China.\protect\\
E-mail: kunbai@tencent.com.
}%
\thanks{Manuscript received April 19, 2005; revised August 26, 2015.}
}

\markboth{IEEE Transactions on Pattern Analysis and Machine Intelligence}%
{Shell \MakeLowercase{\textit{et al.}}: Bare Demo of IEEEtran.cls for Computer Society Journals}

\IEEEtitleabstractindextext{%
\begin{abstract}
Authentication is the task of confirming the matching relationship between a data instance and a given identity. Typical examples of authentication problems include face recognition and person re-identification. Data-driven authentication could be affected by undesired biases, i.e., the models are often trained in one domain (e.g., for people wearing spring outfits) while applied in other domains (e.g., they change the clothes to summer outfits). Previous works have made efforts to eliminate domain-difference. They typically assume domain annotations are provided, and all the domains share classes. However, for authentication, there could be a large number of domains shared by different identities/classes, and it is impossible to annotate these domains exhaustively. It could make domain-difference challenging to model and eliminate. In this paper, we propose a domain-agnostic method that eliminates domain-difference without domain labels. We alternately perform latent domain discovery and domain-difference elimination until our model no longer detects domain-difference. In our approach, the latent domains are discovered by learning the heterogeneous predictive relationships between inputs and outputs. Then domain-difference is eliminated in both class-dependent and class-independent spaces to improve robustness of elimination. We further extend our method to a meta-learning framework to pursue more thorough domain-difference elimination. Comprehensive empirical evaluation results are provided to demonstrate the effectiveness and superiority of our proposed method. 
\end{abstract}

\begin{IEEEkeywords}
Domain Agnostic Learning, Unbiased Authentication, Generalized  Cross-latent-domain Recognition, Predictive Relationships, Meta Learning.
\end{IEEEkeywords}}

\maketitle

\IEEEdisplaynontitleabstractindextext
\IEEEpeerreviewmaketitle

\section{Introduction}\label{sec:introduction}

\IEEEPARstart{A}{uthentication} is the problem of confirming whether the data instances match personal identities. There is a variety of authentication applications including face recognition~\cite{zhao2003face}, fingerprint verification~\cite{yager2004fingerprint} and person re-identification~\cite{bedagkar2014survey,zheng2016person}. However, the data-driven authentication process often suffers from undesired biases. In particular, the verification model is usually trained in one domain and tested and verified in other domains, which {could cause inconsistent prediction results due to domain difference/shift.} For example, for person re-identification~\cite{bedagkar2014survey}, the prediction could be compromised due to the seasonal outfits changing or the angle variation between a camera and a pedestrian.\footnote{The seasonal outfits include four domains: \texttt{spring}, \texttt{summer}, \texttt{autumn}, and \texttt{winter}. The outfit and the shooting angle can be regarded as two \emph{types} of domain-difference.} {Domain difference/shift can take many forms, including covariate shift (distribution difference in $p(\x)$, where $\x$ denotes the feature)~\cite{shimodaira2000improving}, target/prior probability shift (difference in $p(y)$, where $y$ denotes the output target)~\cite{zhang2013domain,storkey2009training}, conditional shift (difference in $p(y\mid \x)$)~\cite{zhang2013domain}, 
and joint shift (difference in $p(y,\x)$)~\cite{moreno2012unifying}.
Domain transfer methods can be categorized into two types~\cite{day2017survey,friedjungova2017asymmetric}: 1) symmetric methods that unify multiple domains into one common space; 2) asymmetric methods that map data from one domain to another. }
To understand how we can alleviate the aforementioned problem, we study the learning task for unbiased authentication. Specifically, we treat authentication as a recognition problem so that each identity corresponds to a class. {For model efficiency, this paper focuses on the symmetric methods, eliminating domain-difference to unify domains.}



%

\begin{table}\small
  \caption{An example of the assumptions of our {proposed GCLDR problem. Each class set includes its unique classes. ``Train''/``Test'' denotes the data for training/testing. ``Latent'' suggests domain labels are absent.}}
  \label{tab:proposed_problem}
  \centering
  \begin{tabular}{lccc}
    \toprule
    &  \tabincell{c}{Class Set 1} &\tabincell{c}{Class Set 2} & \tabincell{c}{Class Set 3}\\
    \midrule
    Latent Domain 1 & Train   & Test\cellcolor{Gray} & Test\cellcolor{Gray}\\
Latent Domain 2 & Test \cellcolor{Gray}& Train& Test\cellcolor{Gray}  \\
Latent Domain 3 & Test \cellcolor{Gray}& Test\cellcolor{Gray}& Train  \\
    \bottomrule
  \end{tabular}
\end{table}

The existing research on domain generalization~\cite{xu2014exploiting,khosla2012undoing,li2017deeper,ghifary2015domain,muandet2013domain,li2018learning,li2018deep,li2018domain} or multi-domain adaptation~\cite{sun2011two,duan2009domain,mansour2009domain,duan2009domain,duan2012domain} typically aims at learning 
{domain-transfer} from multiple training domains. {One limitation of} these approaches is that they assume domain labels are available. However, in real applications, it is labor-intensive and time-consuming to provide annotations of all domains, especially when the number of domains is massive. 
Therefore, researchers recently propose to detect the {\em latent} domains whose labels are absent~\cite{hoffman2012discovering,li2017domain,gong2013reshaping,mancini2018boosting,xiong2014latent,niu2015multi,mancini2019discovering}. 
These methods are typically purely based on features. However, the original reason why domain generalization/adaptation is essential is that the learned predictive relationship between features and targets, modeled by $p(y\mid \x)$, on training data might change on testing data.
Therefore, the key to understand domain difference is why the predictive relationship $p(y\mid \x)$ is different in different domains. 
Consequently, to {precisely} learn and unify latent domains, we should exploit the heterogeneity in $p(y\mid \x)$, in addition to the heterogeneity in features, as in most of the existing research.




\begin{figure}[t]
\centering
\subfigure[Training]{\includegraphics[width=0.8\linewidth]{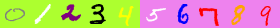}}
\subfigure[Testing]{\includegraphics[width=0.8\linewidth]{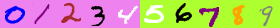}}
\caption{An experiment setting of C-MNIST with the background color as the domain-difference. Best viewed in color.}
\label{fig:cmnist_dataset_bg}
\end{figure}

Another limitation of existing approaches is that they require the classes for recognition to be shared across all the domains. 
However, since the classes are the identities, it is impossible to collect all domain data for every individual in the authentication task. The real scenario is that only the data from one domain are collected for each subset of individuals, and this phenomenon is characterized in the generalized cross-domain recognition (GCDR) problem~\cite{liang2019additive}, although they assume that domain labels are provided. In this paper, we propose to study the problem where domain labels are absent, which is referred to as a generalized cross-latent-domain recognition (GCLDR) problem. {We present a toy example with only one type of latent domain difference in Table~\ref{tab:proposed_problem}. Unlike the setups for standard domain generalization or adaptation, for training data in our example, different latent domains do not share any classes. For a colored-digit-recognition example shown in Fig.~\ref{fig:cmnist_dataset_bg}, in the training data set, one can only observe images of digits $0\sim 4$ with a green background, and digits $5\sim9$ with a pink background, while the images of $5\sim9$ with the green background are not observed. The background colors are the latent domains and do not share any digit in the training set. Consequently, for any testing data sample, 
a trained model could generate wrong prediction due to the misleading of its domain information. 
Thus, we should transfer knowledge across domains, then the recognition model trained on other latent domains can be used.

}

To address the above issues, we propose a novel domain-agnostic method to learn and unify latent domains to tackle the GCLDR problem. Specifically, we propose a latent-domain discovery (LDD) module
to 
capture the heterogeneous predictive relationships between features and targets from different latent domains, where each predictive relationship comes from one latent domain. The LDD model also includes a domain-discrimination component, which discriminates latent domains based on features. The posterior distribution of latent domains given features and targets naturally integrates the recognition and domain-discrimination components via the Bayes rule such that{ $p(z\mid y,\x)\propto p(y\mid \x)p(z\mid \x)$, where $z$ denotes the latent domain. Thus, leveraging the predictive relationships $p(y\mid \x)$s, we discover latent domains by the joint distribution $p(y,\x)$}. By forcing the posterior probability of every latent domain to be equal, we can eliminate domain difference by the joint distribution { $p(y,\x)$}.
We alternately perform the latent-domain discovery and unification processes. Therefore, every possible type of domain difference (\eg, season, and shooting angle) can be learned and eliminated successively.
As a consequence, the number of latent domains, which is a hyper-parameter, can be robustly fixed to be two: as long as separated latent domains exist, they can be organized into two different groups. On the other hand, inspired by Liang~\etal~\cite{liang2019additive}, we propose to eliminate latent domain-difference in both class-dependent and class-independent spaces in two branches of our network, respectively. This architecture can be more robust for domain-difference elimination. 
Finally, inspired by Li~\etal~\cite{li2018learning}, we provide a meta-learning extension of our method to encourage commonality among latent domains by splitting latent domains into meta-train domains and meta-test domains and optimizing via a learn-to-learn procedure: learning how to minimize the losses on the meta-train domains to minimize the losses on the meta-test domains~\cite{li2018learning}.
The experimental results on benchmark and real-world data sets demonstrate the effectiveness and superiority of our method. We also conduct ablation experiments to show the contribution of each component of our proposed framework.
 
Our contributions are highlighted as follows.

\begin{itemize}
    \item We propose to address a generalized cross-latent-domain recognition (GCLDR) problem, where domain labels are absent, and domains do not share classes. This problem is challenging and common in real applications of authentication, \eg, face recognition authentication.
    \item We provide valuable insights that 1) alternately performing latent-domain discovery and domain-difference elimination can boost generalization to unseen $\langle$class, domain$\rangle$ combinations; 
    2) both procedures should be based on predictive relationships between features and targets via posterior probabilities, which is novel compared with existing domain adaptation/generalization methods; and 3) eliminating domain-difference in both class-dependent and class-independent spaces can boost robustness of domain-difference elimination.
    \item Our method achieves significant improvements compared with baselines on one benchmark and two authentication datasets, and even rivals the state-of-the-art methods using extra domain labels on both authentication datasets.
\end{itemize}
  
The remainder of this paper is organized as follows. Section~\ref{sec:related_work} discusses related works. The proposed methods are presented in Section~\ref{sec:method}. Section~\ref{sec:exp} presents an empirical evaluation of the proposed approaches. Section~\ref{sec:discussion} discusses why related works failed in our setting, and Section \ref{sec:conclusion} provides conclusions.

\section{Related Works}\label{sec:related_work}

\noindent\textbf{Domain Generalization/Adaption \;} Domain generalization approaches~\cite{xu2014exploiting,khosla2012undoing,li2017deeper,ghifary2015domain,muandet2013domain,li2018learning,li2018deep,li2018domain,shankar2018generalizing} typically train models on single/multi-domain data with shared classes for recognition on an unseen domain, while domain adaptation approaches~\cite{patel2015visual,ganin2015unsupervised,bousmalis2016domain,long2017deep,csurka2017domain,sun2015survey,wang2018deep,long2018conditional,cicek2019unsupervised,wen2019bayesian,dai2019contrastively,chen2019joint,Lee_2019_ICCV} typically train models on source domains and recognize on target domains which share classes with source domains without class labels.  Domain generalization and adaptation both typically assume that on training data, classes are shared across domains and domain labels are provided, which do not hold in our GCLDR problem.  

\noindent\textbf{Domain Agnostic Learning \;} Recently, several domain-agnostic learning approaches~\cite{peng2019domain,chen2019blending,liu2019compound,romijnders2019domain} emerge to typically handle the domain-adaptation problem where the target domain may contain several sub-domains without domain labels~\cite{peng2019domain}. DADA~\cite{peng2019domain} and  OCDA~\cite{liu2019compound} propose novel mechanisms and achieve effective domain-adaptation performances, but do not discover latent domains in the target domain and exploit the information. By contrast, BTDA~\cite{chen2019blending} clusters raw and deep features to discover latent domains. However, its latent-domain discovery is only feature-based and does not exploit the heterogeneous predictive relationships of $p(y\mid \x)$. DANL~\cite{romijnders2019domain} learns a normalization layer, but may be limited for more sophisticated domain-difference~\cite{peng2019domain}. Except for BTDA, these methods work on the extra knowledge of domain labels (source/target).

\noindent\textbf{Latent Domain Discovery \;} Existing \emph{explicit} latent-domain discovery approaches~\cite{hoffman2012discovering,li2017domain,gong2013reshaping,mancini2018boosting,xiong2014latent,niu2015multi,mancini2019discovering} typically build special models to learn latent domains {explicitly} based on features only. As an exception, Xiong~\etal~\cite{xiong2014latent} propose to learn latent domains via a conditional distribution $p(z\mid  y,\x)$, which is not based on the predictive relationship $p(y\mid \x)$, but based on linear addition of deep features of $y$ and $\x$. When latent domains are discovered, a main-stream of these approaches does not perform domain transfer. However, as explained in introduction, it is not appropriate for our GCLDR problem. 
In contrast, mDA~\cite{mancini2018boosting} (an improved version of DANL~\cite{romijnders2019domain}) and its improved version CmDA~\cite{mancini2019discovering} unify domains by normalizing the hidden space for each domain to have zero mean and unit standard deviation, which may be ineffective for more sophisticated domain-difference. In addition to the above explicit discovery methods, there are also several methods that learn latent domains \emph{implicitly}, including ML-VAE~\cite{bouchacourt2017multi}, MCD~\cite{saito2018maximum} and MCD-SWD~\cite{lee2019sliced}, which may encounter sub-optimal solutions due to not explicitly modeling multiple latent domains thus may ignore fine-grained information.
 
\noindent\textbf{Meta Learning \;}
Meta-learning's idea is learning to learn~\cite{thrun2012learning,schmidhuber1997shifting}, which recently gains great popularity~\cite{finn2017model,sariyildiz2019gradient,vinyals2016matching,li2017deeper,ravi2016optimization,andrychowicz2016learning}. Recently, a few meta-learning domain generalization approaches~\cite{li2018learning,balaji2018metareg,li2019feature} have been proposed, which learn a set of domains to learn another set of domains. However, these methods require domain labels for training. Nonetheless, inspired by these methods, we propose a meta-learning framework when latent domains are discovered.

\noindent\textbf{Self-Supervised Learning \;} Self-supervised learning typically formulates a auxiliary learning task~\cite{kolesnikov2019revisiting,gidaris2018unsupervised,oord2018representation,caron2018deep,lee2018making,gidaris2018unsupervised} to improve supervised-learning without class labels, and recently is found to be effective for generalization~\cite{liu2019self} or domain generalization/adaptation~\cite{carlucci2019domain,xu2019self}. However, it may suffer sub-optimal solutions in our GCLDR problem due to not explicitly modeling or unifying latent domains.

  \section{Methodology}\label{sec:method}

This section lays out the details of our proposed network by first defining notations and problem settings. Consider a data set $\mathcal{D} = \{(\x^i,y^i)\}_{i=1}^n$ consisting of $n$ independent samples. For the $i$th sample, $\x^i\in\mathbb{R}^d$ is a feature vector with $d$ dimensions, and $y^i\in\mathbb{Z}_+$ is a categorical class label of the recognition task. The data set contains no domain labels. In other words, the setting of our proposed GCLDR problem extends the GCDR problem \cite{liang2019additive} such that no domain labels are given.
Throughout the paper, we denote $[k]$ as the index set $\{1,2,\ldots,k\}$.

  



\subsection{Heterogeneous Predictive Relationships Discovery and Unification}\label{subsec:hcdd}

 
Our LDD module 
discovers multiple predictive relationships for $p(y\mid \x)$. The discovery process utilizes the hidden features of a deep neural network. Here, we assume that we determine to discover $k\in\mathbb{Z}_+$ latent domains. Note that $k=2$ is adequate, since we will discover and unify the rest of the latent domains successively. Given a hidden feature vector $\f^i$ for the $i$th data sample $\x^i \ (i\in[n])$, we build $k$ local recognition networks $R_{l}^{1},\ldots,R_{l}^{k}$ to learn $k$ conditional distributions of $p(y^i\mid \f^i)$. Each conditional distribution corresponds to a subset of samples, and is denoted by $p(y^i\mid \f,R_{l}^{r})$ which follows a categorical distribution: 
\begin{equation}\label{eq:post}\small
p(y^i \mid \f^i,R_{l}^{r}) = \prod_{j=1}^cp(y^i=j \mid \f^i,R_{l}^{r})^{I(y^i=j)},r\in[k],
\end{equation}
where $c$ denotes the number of classes.
We further build a domain-discrimination network $D$ to discriminate which domain does $\f^i$ belong to. Then $D$ aims to learn $p(z^i=r\mid \f,D)$ for all $r\in[k]$, where $z^i$ denotes the latent domain for current $(y^i,\f^i)$. Then via the Bayes rule, the posterior probability of $(y^i,\f^i)$ belongs to the $r$th domain is
\begin{equation}\label{eq:post}\small
\begin{split}
 \rho^{i,r}&=p(   z^i=r \mid y^i,\f^i,\{R_l^{r}\}_{r=1}^k,D)\\
 &=  
   \frac{p( z^i=r \mid \f^i,D)\prod_{j=1}^cp(y^i=j \mid \f^i,R_l^{r})^{I(y^i=j)}}{\sum_{r'=1}^kp(z^i=r' \mid \f^i,D)\prod_{j=1}^cp(y^i=j \mid \f^i,R_l^{r'})^{I(y^i=j)}}.
\end{split}
\end{equation}
The detailed derivation is in Appendix~\ref{sec:post_and_em}. {We can observe in Eq.~\eqref{eq:post} that, since $\f$ is based on $\x$, the posterior probability models $p(z\mid y,\x)$ which discover latent domains based on the joint distribution $p(y,\x)$. Therefore, given class-label information, the posterior probability can provide more accurate domain-discrimination than the feature-based discriminative probability $p( z^i=r \mid \f^i,D)$ which models $p(z\mid \x)$ using the information of $p(\x)$ only.}

We aim to provide an end-to-end optimization scheme so that we discover latent domains on each mini-batch of data samples in a common mini-batch based optimization procedure. Given the posterior probabilities $\{\rho^{i,r}\}$ --- soft selection of domains, we optimize:
\begin{equation}\label{eq:loss_ce}\small
\begin{split}
\ell_{d} = &-\frac{1}{b}\sum_{i=1}^b\sum_{r=1}^k\rho^{i,r}\sum_{j=1}^cI(y^i=j)\log p(y^i=j \mid \f^i,R_l^{r}) \\
&-\frac{1}{b}\sum_{i=1}^b\sum_{r=1}^k\rho^{i,r}\log p( z^i=r \mid \f^i,D),
\end{split}
\end{equation}
where $b$ denotes the batch size, which is resulted from the Expectation-Maximization derivation (see Appendix~\ref{sec:post_and_em} for details).


To unify latent domains, we propose to force the posterior probabilities to be equal across domains to {eliminate domain-difference by the joint distribution $p(y,\x)$}:
\begin{equation}\label{eq:loss_adv_all}\small
\begin{split}
\ell_{e} = & \frac{1}{b}\sum_{i=1}^b\sum_{r=1}^k( p(   z^i=r \mid y^i,\f^i,\{R_{l}^{r}\}_{r=1}^k,D) - 1/k)^2.
\end{split}
\end{equation}

Alternately computing the posteriors by Eq.~\eqref{eq:post} and minimizing losses in Eq.~\eqref{eq:loss_ce} and \eqref{eq:loss_adv_all}, we can discover and then unify all the latent domains successively, until our model no longer detects domain-difference.

 
\subsection{Double-Space Domain-Difference Elimination}\label{subsec:dsdde}

Based on the latent-domain discovery and unification module introduced in Section \ref{subsec:hcdd}, we propose to eliminate domain-difference both in a class-dependent space (where classes can be recognized) and a class-independent space (where classes cannot be recognized).




\begin{figure}[t]
\centering
\includegraphics[width=1\linewidth]{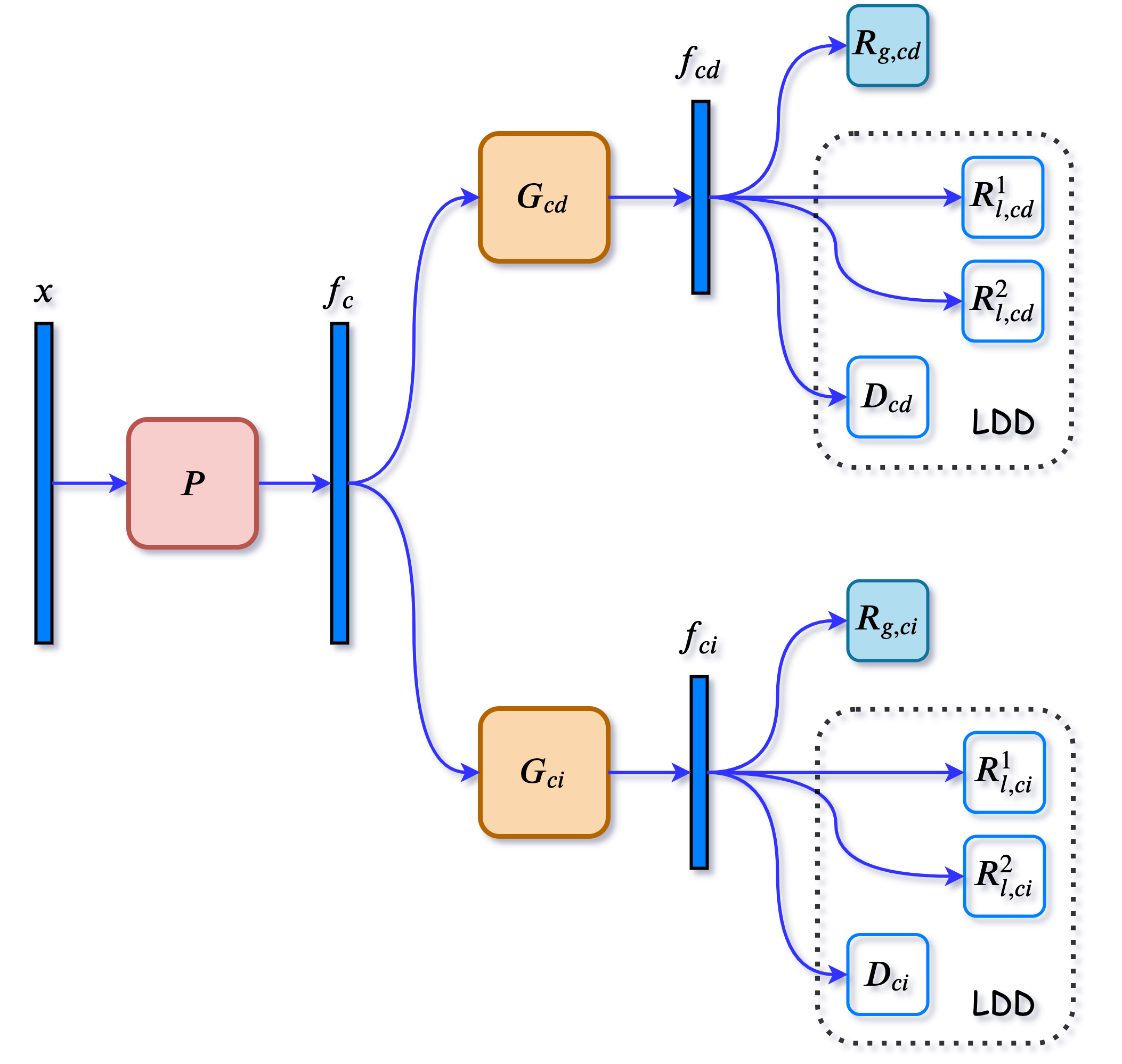}
\caption{The architecture of our framework. $P,G_{cd},G_{ci}$ are feature extractors. $R_{g,cd}$ learns class-dependent features $f_{cd}$s, while $R_{g,ci}$ learns class-independent features $f_{ci}$s. The LDD modules independently learn latent domains in each space. The posteriors of LDD are forced to be equal across domains to eliminate domain-difference.}
\label{fig:model_architecture}
\end{figure}

\begin{figure}[t]
\centering
\subfigure[Raw input images as $x$s]{\includegraphics[width=1\linewidth]{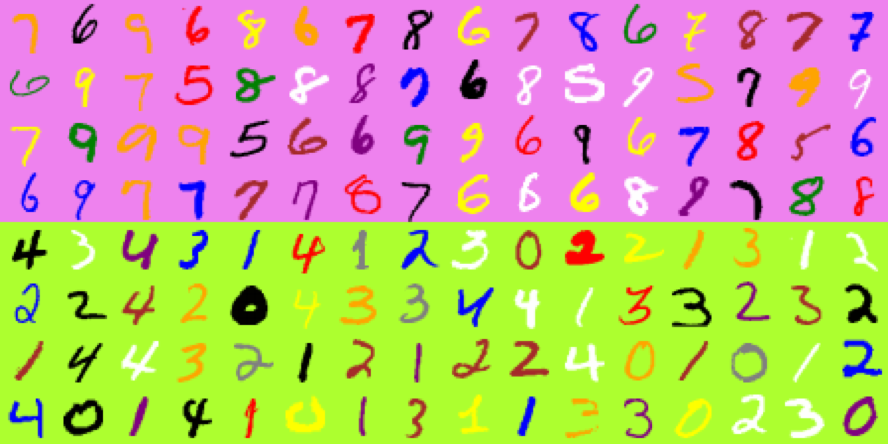}}
\subfigure[Generated hidden feature maps as $f_{cd}$s]{\includegraphics[width=1\linewidth]{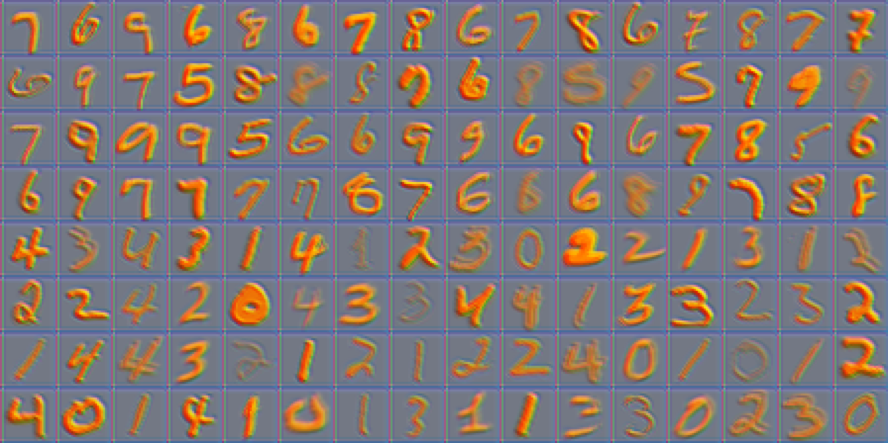}}
\caption{Examples of $f_{cd}$s on C-MNIST resulted from our method. Best viewed in color.}
\label{fig:cmnist_exp}
\end{figure}

We first introduce our model structure to learn hidden features in the above two spaces. As shown in Fig.~\ref{fig:model_architecture}, an input sample $\x^i$ is transformed by a mapping network $P$ into a hidden feature vector $\f_c$, which is further transformed by two feature-extraction networks $G_{cd},G_{ci}$ to obtain a class-dependent feature vector $\f_{cd}^i$ and a class-independent feature vector $\f_{ci}^i$, respectively. We let $\f_{cd}^i$ be class-dependent by using a global recognition network $R_{g,cd}$ to recognize the class from $\f_{cd}^i$ by minimizing:
\begin{equation}\label{eq:loss_ce_gobal_cd}\small
\begin{split}
\calL_{cd} &= \ell_c(\{\f_{cd}^i\}_{i=1}^b,R_{g,cd}),\\
\ell_c
&= -\frac{1}{b}\sum_{i=1}^b\sum_{j=1}^cI(y^i=j)\log p(y^i=j \mid \f_{\cdot}^i,R_{g,\cdot}).
\end{split}
\end{equation}
For $\f_{ci}^i$ to be class-independent, we learn by an adversarial learning process: we first learn a global recognition network $R_{g,ci}$ to recognize the class from $\f_{ci}^i$ by minimizing:
\begin{equation}\label{eq:loss_ce_gobal_ci}\small
\begin{split}
\calL_{ci} =\ell_c(\{\f_{ci}^i\}_{i=1}^b,R_{g,ci}),
\end{split}
\end{equation}
then learn the feature-extraction network $G_{ci}$ to eliminate class-dependence features by minimizing:
\begin{equation}\label{eq:loss_ce_gobal_ci_adv}\small
\begin{split}
&\calL_{ac} = \frac{1}{bc}\sum_{i=1}^b\sum_{j=1}^c( p(y^i=j \mid \f_{ci}^i,R_{g,ci}^{r})-p(y=j))^2,
\end{split}
\end{equation}
where $p(y=j)$ denotes the frequency of class $j$ on the training data.

The domain-difference elimination is also realized via adversarial learning. According to Section~\ref{subsec:hcdd}, we discover latent domains in both class-specific and class-independence spaces by minimizing
\begin{equation}\label{eq:loss_discover_cd}\small
\calL_d = \ell_{d}(\{\f_{cd}^i\}_{i=1}^b,\{R_{l,cd}^{r}\}_{r=1}^k,D_{cd})+ \ell_{d}(\{\f_{ci}^i\}_{i=1}^b,\{R_{l,ci}^{r}\}_{r=1}^k,D_{ci}),
\end{equation}
where $\ell_{d}$ is defined in Eq.~\eqref{eq:loss_ce}, $D_{\cdot}s$ are domain-discrimination networks, and $\{R_{l,\cdot}^{r}\}_{r=1}^k$s are groups of local recognition networks.
Then we eliminate domain-difference in both class-dependence and class-independence spaces. We fix the LDD modules and learn $P,G_{cd},G_{ci}$ by minimizing:
\begin{equation}\label{eq:loss_unify}\small
\begin{split}
\calL_{u} =  \ell_{e}(\{\f_{cd}^i\}_{i=1}^b,\{R_{l,cd}^{r}\}_{r=1}^k,D_{cd})+ \ell_{e}(\{\f_{ci}^i\}_{i=1}^b,\{R_{l,ci}^{r}\}_{r=1}^k,D_{ci}),
\end{split}
\end{equation}
where $\calL_{e}$ is defined in Eq.~\eqref{eq:loss_adv_all}.

We summarize our latent-domain discovery and unification in double spaces in Algorithm~\ref{alg:DA-UA}. For inference, we stack $P,G_{cd}$ and $R_{g,cd}$ to predict the class label $y^i$ for each sample $\x^i$.

\begin{algorithm} [t]
\caption{Learning Algorithm for GCLDR}
\label{alg:DA-UA}{\small
\begin{algorithmic}[1]
\REQUIRE{Data set $\mathcal{D} = \{(\x^i,y^i)\}_{i=1}^n$, where $\forall i\in[n], \ \x^i\in \mathbb{R}^{d}$ and $y^i\in[c]$. Number of latent domains $k\in \mathbb{Z}_+$. Batch size $b\in\mathbb{Z}_+$.}
\ENSURE{Recognition model: $R_{g,cd}(G_{cd}(P(\cdot)))$.}
\WHILE{not converge}
\STATE Sample a mini-batch $\{\x^i,y^i\}_{i=1}^b$.
\STATE Forward the min-batch to obtain $\f_{cd}^i=G_{cd}(P(\x^i))$ and $\f_{ci}^i=G_{ci}(P(\x^i))$ for all $i\in[b]$.
\STATE Compute posteriors based on $\{\f_{cd}^i\}_{i=1}^b$ and $\{\f_{ci}^i\}_{i=1}^b$ by Eq.~\eqref{eq:post}, respectively.
\STATE Optimize recognition and discrimination networks, \ie, $R_{g,cd},R_{g,ci},\{R_{l,cd}^{r}\}_{r=1}^k,\{R_{l,ci}^{r}\}_{r=1}^k,D_{cd},D_{ci}$:
\begin{equation}\label{eq:opt_pos}\small
    \min \ \calL_{cd} + \calL_{ci} + \calL_{d},
\end{equation}
where $\calL_{cd},\calL_{ci},\calL_{d,ci}$ are defined in Eq.~\eqref{eq:loss_ce_gobal_cd},~\eqref{eq:loss_ce_gobal_ci}, and~\eqref{eq:loss_discover_cd}, respectively.
\STATE Optimize mapping and feature-extraction networks, \ie, $P,G_{cd},G_{ci}$. \label{eq:post_merg}
\begin{equation}\label{eq:opt_neg}\small
    \min \ \calL_{cd} + \calL_{ac} +\calL_{u},
\end{equation}
where $\calL_{ac},\calL_{u}$ are defined in Eq.~\eqref{eq:loss_ce_gobal_ci_adv} and~\eqref{eq:loss_unify}, respectively.
\ENDWHILE
\end{algorithmic}}
\end{algorithm}

\subsection{Meta-Learning Paradigm Extension}\label{subsec:mlpe}
 
Recall that in Eq.~\eqref{eq:loss_adv_all},  when latent domains are discovered, class predictions between different domains are equalized to eliminate domain-differences on $p(y\mid \x)$. Inspired by Li~\etal~\cite{li2018learning}, we propose an alternative approach to encourage consistency of $p(y\mid \x)$ across latent domains by a meta-learning mechanism: we split latent domains into meta-train domains and meta-test domains, and learn how to minimize the losses on the meta-train domains in order to minimize the losses on the meta-test domains~\cite{li2018learning}. 

Specifically, for the loss of each domain, we use the local recognition loss soft-selected by the posteriors:
\begin{equation}\label{eq:loss_local_each}\small
\begin{split}
\ell_{s}(r) = -\frac{1}{b}\sum_{i=1}^b\rho^{i,r}\sum_{j=1}^cI(y^i=j)\log p(y^i=j \mid \f^i,R_l^{r}),
\end{split}
\end{equation}
where $\rho^{i,r}$ is the posterior defined in Eq.~\eqref{eq:post}. The loss in Eq.~\eqref{eq:loss_local_each} can be based on either LDD modules in class-dependent or class-independent spaces. So we merge the losses for both spaces:
\begin{equation}\label{eq:loss_local_each_both}\small
\begin{split}
\calL_s(r) =& \ell_{s}(r;\{\f_{cd}^i\}_{i=1}^b,\{R_{l,cd}^{r}\}_{r=1}^k,D_{cd})\\
&+ \ell_{s}(r;\{\f_{ci}^i\}_{i=1}^b,\{R_{l,ci}^{r}\}_{r=1}^k,D_{ci}).
\end{split}
\end{equation}
Then Step~\ref{eq:post_merg} in Algorithm~\ref{alg:DA-UA} can be replaced by Algorithm~\ref{alg:MLPU}.
\begin{algorithm} [t]
\caption{Meta-Learning for Conditional Unification}
\label{alg:MLPU}{\small
\begin{algorithmic}[1]
\REQUIRE{$\theta$ that collects the model-parameters for $P,G_{cd},G_{ci}$ only. Hyper-parameters $\gamma,\alpha\geq0$. }
\ENSURE{Updated model parameters $\hat{\theta}$.}
\STATE Randomly split latent domains to two sets: $\calS_1 \cup \calS_2=[k],\calS_1 \cap \calS_2=\emptyset$.
\STATE Meta-train 1: $\nabla_1=\frac{1}{|\calS_1|}\sum_{r\in\calS_1}\nabla_{\theta}\calL_s(r) $, where $\calL_s(r)$ is defined in Eq.~\eqref{eq:loss_local_each_both}.
\STATE Meta-train 2: $\nabla_2=\frac{1}{|\calS_2|}\sum_{r\in\calS_2}\nabla_{\theta}\calL_s(r)$.
\STATE Learn class-dependent and class-independent feature-extraction, and eliminate domain-differences: 
\begin{equation}\label{eq:opt_neg_post_meta}\small
\begin{split}
&\min_{\theta} \ \calL_{cd} + \calL_{ac} +\calL_{u} + \calL_{meta},\\
    \calL_{meta}=& \frac{\gamma}{2}\biggl[\frac{1}{|\calS_1|}\sum_{r\in\calS_1}\calL_{s}(r;\theta-\alpha\nabla_2) \\
    &+\frac{1}{|\calS_2|}\sum_{r\in\calS_2}\calL_{s}(r;\theta-\alpha\nabla_1) 
    \biggr].
\end{split}
\end{equation}
\end{algorithmic}}
\end{algorithm}
 
We follow Li~\etal~\cite{li2018learning} to analyze our Algorithm~\ref{alg:MLPU} via first order Taylor expansion, and obtain similar results: 
\begin{observation}\label{th:meta}
The meta learning loss $\calL_{meta}$ in Eq.~\eqref{eq:opt_neg_post_meta} of Algorithm~\ref{alg:MLPU} can be approximated as:
\begin{equation}\label{eq:opt_neg_post_meta_approx}\small
\calL_{meta} \approx \frac{\gamma}{2}\biggl[\frac{1}{|\calS_1|}\sum_{r\in\calS_1}\calL_{s}(r)+\frac{1}{|\calS_2|}\sum_{r\in\calS_2}\calL_{s}(r)
    \biggr] - \gamma\alpha\nabla_1\trans\nabla_2.
\end{equation}
\end{observation}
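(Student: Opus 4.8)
The plan is to expand each inner term $\calL_s(r;\theta-\alpha\nabla_2)$ (for $r\in\calS_1$) and $\calL_s(r;\theta-\alpha\nabla_1)$ (for $r\in\calS_2$) by a first-order Taylor expansion around $\theta$, exactly as in Li~\etal~\cite{li2018learning}. For a fixed $r\in\calS_1$ we write
\begin{equation}\label{eq:taylor_single}\small
\calL_{s}(r;\theta-\alpha\nabla_2) \approx \calL_{s}(r;\theta) - \alpha\,\nabla_{\theta}\calL_{s}(r;\theta)\trans\nabla_2,
\end{equation}
dropping terms of order $O(\alpha^2)$, and symmetrically $\calL_{s}(r;\theta-\alpha\nabla_1)\approx \calL_{s}(r;\theta) - \alpha\,\nabla_{\theta}\calL_{s}(r;\theta)\trans\nabla_1$ for $r\in\calS_2$.

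Next I would average \eqref{eq:taylor_single} over $r\in\calS_1$ and the symmetric expression over $r\in\calS_2$, then substitute into the definition of $\calL_{meta}$ in Eq.~\eqref{eq:opt_neg_post_meta}. The zeroth-order pieces collect directly into $\frac{\gamma}{2}\bigl[\frac{1}{|\calS_1|}\sum_{r\in\calS_1}\calL_s(r)+\frac{1}{|\calS_2|}\sum_{r\in\calS_2}\calL_s(r)\bigr]$. For the first-order pieces, observe that $\frac{1}{|\calS_1|}\sum_{r\in\calS_1}\nabla_{\theta}\calL_s(r;\theta)$ is by definition exactly $\nabla_1$, and likewise $\frac{1}{|\calS_2|}\sum_{r\in\calS_2}\nabla_{\theta}\calL_s(r;\theta)=\nabla_2$; hence the cross terms become $-\frac{\gamma}{2}\alpha\bigl(\nabla_1\trans\nabla_2 + \nabla_2\trans\nabla_1\bigr) = -\gamma\alpha\,\nabla_1\trans\nabla_2$ using symmetry of the inner product. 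Combining the two contributions yields exactly Eq.~\eqref{eq:opt_neg_post_meta_approx}.

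The only subtle point, and the one I would be most careful about, is the bookkeeping of \emph{which} gradient perturbs \emph{which} set: the meta-train-1 gradient $\nabla_1$ is aggregated over $\calS_1$ but is used to perturb the losses evaluated on $\calS_2$ (and vice versa), so that the resulting alignment term is the \emph{cross}-correlation $\nabla_1\trans\nabla_2$ rather than $\|\nabla_1\|^2$ or $\|\nabla_2\|^2$. Getting this cross-pairing right is what makes the inner product term come out symmetric and produces the single clean term $-\gamma\alpha\nabla_1\trans\nabla_2$; everything else is routine linearization. Since the statement is an approximation (an \emph{Observation} rather than a theorem), no further control of the remainder is required beyond noting it is $O(\alpha^2)$, matching the treatment in~\cite{li2018learning}.
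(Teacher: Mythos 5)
Your proposal is correct and follows essentially the same route as the paper's proof: a first-order Taylor expansion of each $\calL_s(r;\theta-\alpha\nabla_{\cdot})$ about $\theta$, followed by averaging so that $\frac{1}{|\calS_1|}\sum_{r\in\calS_1}\nabla_\theta\calL_s(r;\theta)=\nabla_1$ (and likewise for $\calS_2$), and combining the two cross terms $-\frac{\gamma\alpha}{2}(\nabla_1\trans\nabla_2+\nabla_2\trans\nabla_1)=-\gamma\alpha\nabla_1\trans\nabla_2$. Your bookkeeping of which gradient perturbs which set also matches Eq.~\eqref{eq:opt_neg_post_meta}, so nothing is missing.
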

\begin{proof}
The proof is in Appendix~\ref{sec:proof_meta}.
\end{proof}
Observation~\ref{th:meta} suggests that our meta-learning based unification encourages the concordance of gradients between different domains.
Since the gradient of a deep neural network contains the information of all the related layers, it is possible to obtain superior transfer performance, comparing with Step~\ref{eq:post_merg} of Algorithm~\ref{alg:DA-UA} which only encourages the class prediction to be consistent across domains.

\section{Experiments}\label{sec:exp}

In this section, we evaluate our proposed framework. Both synthetic and real-world data sets are used for extensive evaluations. Our implementation uses Keras with Tensorflow~\cite{abadi2016tensorflow} backends.


We evaluate in the setting proposed by Liang~\etal~\cite{liang2019additive}:
training domains do not share classes, and testing combinations of $\langle$class, domain$\rangle$ are different from those of training data. We conduct comprehensive evaluations on three data sets used by Liang~\etal~\cite{liang2019additive}: (1) the C-MNIST data set~\cite{lu2018attribute} with $10$ classes and the background color as the domain-difference, (2) the re-organized CelebA data set~\cite{liu2015deep} with $211$ classes and whether wearing eyeglasses as the domain-difference, and (3) the authentication data set based on mobile sensors developed by Liang~\etal~\cite{liang2019additive} with $29$ classes and the OS types as the domain-difference. The detailed re-organization process for each data set has been described by Liang~\etal~\cite{liang2019additive} and will be introduced in the following sections. Domain labels are not used. 10\% data of the testing set are randomly selected for validation. For each method on each data set, we repeat 20 times and report the averaged results.

\noindent\textbf{Methods for Comparison \;} We compare our method in Algorithm~\ref{alg:DA-UA} (Ours) and our method replacing the Step~\ref{eq:post_merg} of Algorithm~\ref{alg:DA-UA} with Algorithm~\ref{alg:MLPU} (Ours-Meta) with the state-of-the-art baselines by firstly comparing the direct learning strategy (Direct) that stacks $P, G_{cd}$, and $R_{g, cd}$ only. Then we compare the methods that use extra domain labels to evaluate to what extent the provided domain labels can help to boost the transfer performances. These methods include ABS-Net~\cite{lu2018attribute},
ELEGANT~\cite{Xiao_2018_ECCV},
RevGrad~\cite{ganin2014unsupervised},   CDRD~\cite{liu2017detach},
SE-GZSL~\cite{verma2018generalized},
and AAL-UA~\cite{liang2019additive}. We present their results reported by Liang~\etal~\cite{liang2019additive}.  For the methods that do not use extra domain labels for training, we compare three methods that implicitly discover and unify latent domains: MCD~\cite{saito2018maximum}, MCD-SWD~\cite{lee2019sliced}, and ML-VAE~\cite{bouchacourt2017multi}; and two methods that explicitly perform latent-domain discovery-and-unification: mDA~\cite{mancini2018boosting} and CmDA~\cite{mancini2019discovering}. For the domain-agnostic domain-adaptation methods, we compare DADA~\cite{peng2019domain} and BTDA~\cite{chen2019blending} only, because that mDA can be regarded as an improved version of DANL~\cite{romijnders2019domain} for our problem, and that OCDA~\cite{liu2019compound} is elaborate, whose code has not been released yet; for each data set, we treat the training data as both the required source and target domains for these methods to train. For image data sets, we involve the self-supervised methods JiGen~\cite{carlucci2019domain},  Rot~\cite{xu2019self} and MAXL~\cite{liu2019self}. We build the base modules (\eg, feature extractors, and classifiers) with the same structure as ours to conduct fair experiments with the hyper-parameters optimized on the data sets and settings in this paper.

\noindent\textbf{Evaluation Metrics \;}
We follow the settings of Liang~\etal~\cite{liang2019additive} to evaluate prediction performances for both multi-label and multi-class types of recognition. For the multi-label type, we use average AUC (aAUC) which is defined as the average of the area under the ROC curve for every class, the average false acceptance rate (aFAR), and the average false rejection rate (aFRR). We report aAUC and average balanced false rate (aBFR $ =(\mbox{aFAR}+\mbox{aFRR})/2$) as balanced scores since the negative samples dominate for each class. For the multi-class type, we report top-1 accuracy (ACC@1).

\noindent\textbf{Implementation Details \;}
We constrain our model-capacity to be the same as Liang~\etal~\cite{liang2019additive} to acquire fair comparisons.
For all experiments, $G_{cd}$ and $G_{ci}$ are built by a single hidden layer with hyperbolic-tangent activation function, respectively. $R_{g,cd},R_{g,ci},\{R_{l,cd}^r\}_{r=1}^k,\{R_{l,ci}^r\}_{r=1}^k,D_{cd}$ and $D_{ci}$ are built by generalized linear layers, and using softmax as the activation function. 
We build input mapping networks $P$s with the same structures as those designed by Liang~\etal~\cite{liang2019additive} to acquire fair comparisons.
For image data sets, a simple Convolutional Neural Network (CNN) shown in Table~\ref{tab:simple_CNN} is built as the network $P$. A dropout layer with drop-rate of $0.25$ is used after the max-pooling layer. The output of $P$ is flattened for subsequent full-connected layers.
\setlength{\tabcolsep}{2pt}
\begin{table}[htpb]\small
  \caption{The CNN model used as $P$ for image data sets.}\label{tab:simple_CNN}
  \centering
  \begin{tabular}{cccccc}
    \toprule
      Layer & \# Filters &  Kernel Size & Stride & \# Padding &Activation \\
     \midrule
 Convolution & 32 &  3 & 3 & 0 & ReLu\\ 
 Convolution & 32 &  3 & 3 & 0 & ReLu\\ 
 Max-pooling & - &  2 & 2 & 0 & -\\
    \bottomrule
  \end{tabular}
\end{table}
For vector-based data sets, $P$ is built by a fully-connected neural network as shown in Table~\ref{tab:input_fc}. A dropout layer with drop-rate of $0.5$ is used after the max-pooling layer. The output of $P$ is flattened for subsequent full-connected layers.
\setlength{\tabcolsep}{2pt}
\begin{table}[htpb]\small
  \caption{The fully-connected model used as $P$ for vector-based data sets.}\label{tab:input_fc}
  \centering
  \begin{tabular}{cc}
    \toprule
      Layer & \# Filters  \\
     \midrule
 Fully-Connected & 512  \\ 
 Batch-Normalization & - \\ 
 Swish Activation~\cite{ramachandran2018searching} & -\\
    \bottomrule
  \end{tabular}
\end{table}
We set $k=2$ as discussed in the introduction. For the C-MNIST and the Mobile data set, the batch size is set to $b=512$, while for the CelebA data set, $b=128$. 
We approximate the meta learning loss $\calL_{meta}$ by Eq.~\eqref{eq:opt_neg_post_meta_approx}, according to Observation~\ref{th:meta}, and set $\gamma=0.01,\alpha=1$.


\subsection{Handwritten Digital Experiments}

The C-MNIST data set is originally built by Lu \etal~\cite{lu2018attribute}. It consists of $70$k colored RGB digital images with resolution of $28\times 28$ ($60$k for training and $10$k for testing). It is built from the original gray images of MNIST by adding $10$ background colors (b-colors) and other $10$ foreground colors (f-colors), resulting in 1k possible combinations ($10$ digits $\times$ $10$ b-colors $\times$ $10$ f-colors). Examples from C-MNIST are shown in Fig.~6 of Lu \etal~\cite{lu2018attribute}. Liang~\etal~\cite{liang2019additive} re-construct the C-MNIST data set for their GCDR problem. As shown in Fig.~\ref{fig:cmnist_dataset_bg}, training digits $0\sim 4$ have a green b-color, while $5\sim 9$ have a pink b-color. On the contrary, testing digits $0\sim 4$ have a pink b-color, while $5\sim 9$ have a green b-color. Other data are dropped. The re-construction results in $5970$ training instances and $1003$ testing instances in total.

Table~\ref{tab:result_basic_compare} summarizes the performance comparisons on C-MNIST. The results clearly show that our methods significantly outperform the direct learning method, which proves the effectiveness of our methods. Furthermore, our methods significantly outperform the baseline methods that do not use domain labels for training, which shows the superiority of our proposed methods. Moreover, our methods even significantly outperform the majority of the baseline methods that use domain labels for training, except for the AAL-UA and the SE-GZSL methods. Considering these methods use additional domain labels, which renders it much easier for cross-domain recognition, these results provide sound evidence for the effectiveness and superiority of our methods. 
Fig.~\ref{fig:cmnist_exp} shows some examples of generated hidden feature maps, in which the background difference is eliminated. Our meta-learning extension obtains comparable performances, compared with our best results, which demonstrates that our meta-learning framework is promising to handle our GCLDR problem.
On the other hand, the MCD based methods show severe negative transfer. We conjecture that the training objective of MCD is not suitable for our GCLDR problem. 
Detailed discussion will be present in Section~\ref{sec:discussion}.

\begin{table}
  \caption{Performances (\%) comparison on the C-MNIST data set. ``$^{*}$'' denotes the methods that use \emph{extra} domain labels for training.}\label{tab:result_basic_compare}
  \centering
  \begin{tabular}{lccc}
    \toprule
     Methods  & aAUC  & aBFR & ACC@1 \\
     \midrule
 Direct & 78.67     & 26.32& 20.88\\\hline
 $^{*}$ABS-Net~\cite{lu2018attribute} & 77.69    &27.41& 15.92 \\
 $^{*}$ELEGANT~\cite{Xiao_2018_ECCV}    &  79.94    & 24.61&10.68 \\
 $^{*}$RevGrad~\cite{ganin2014unsupervised}     & 80.71  &24.45 & 21.68 \\
 $^{*}$CDRD~\cite{liu2017detach} & 84.83  & 35.79  & 33.49 \\
 $^{*}$AAL-UA~\cite{liang2019additive} &  98.42  & 6.14 & 84.27 \\
 $^{*}$SE-GZSL~\cite{verma2018generalized} &   {99.79}    & {2.72}&  {94.83} \\\hline
 MCD~\cite{saito2018maximum}   &49.90    & 50.09 & 10.89\\
 MCD-SWD~\cite{lee2019sliced}   &50.12    & 49.89 & 10.69\\
 ML-VAE~\cite{bouchacourt2017multi}  &77.26    & 28.06 & 18.73\\ 
 DADA~\cite{peng2019domain} &83.90    & 22.29 & 15.83\\
 BTDA~\cite{chen2019blending}  &85.14    & 20.82 & 26.43\\
 JiGen~\cite{carlucci2019domain}   &82.44   & 24.44 & 33.33\\
 Rot~\cite{xu2019self}   &75.52   & 36.41 & 18.42\\
 MAXL~\cite{liu2019self} & 78.87   & 25.28 & 21.31\\
 mDA~\cite{mancini2018boosting} &83.34  & 21.98 & 24.26\\
 CmDA~\cite{mancini2019discovering} &86.52  & 21.00 & 43.21\\\hline
Ours  &  \textbf{93.46}  & \textbf{13.26} & \textbf{60.63} \\
Ours-Meta  &  91.06  & 16.13 & 53.16 \\
    \bottomrule
  \end{tabular}
\end{table}

\begin{table}
  \caption{Performances (\%) comparison on the CelebA data set. ``$^{*}$'' denotes the methods that use \emph{extra} domain labels for training.}\label{tab:result_celebA_compare}
  \centering
  \begin{tabular}{lccc}
    \toprule
     Methods  & aAUC  & aBFR & ACC@1 \\
     \midrule
 Direct & 78.74    & 41.58& 11.49 \\\hline
 $^{*}$ABS-Net~\cite{lu2018attribute} & 75.80    &34.90& 8.09\\
 $^{*}$ELEGANT~\cite{Xiao_2018_ECCV}    &  75.88  & 32.02&  10.05\\
 $^{*}$RevGrad~\cite{ganin2014unsupervised}     & 80.12   &  31.18& 10.96\\
$^{*}$CDRD~\cite{liu2017detach} & 80.20    &   39.90&  {16.47}\\
$^{*}$SE-GZSL~\cite{verma2018generalized}  & 84.96   &26.62 & 12.76
\\
 $^{*}$AAL-UA~\cite{liang2019additive} &   {87.07}   & {22.19}&  {14.99}\\\hline
 MCD~\cite{saito2018maximum}      & 50.16    &49.98& 0.45 \\
 MCD-SWD~\cite{lee2019sliced}      & 50.23    &50.15& 0.37 \\
 ML-VAE~\cite{bouchacourt2017multi}     & 75.29    &36.07& 7.97 \\
 DADA~\cite{peng2019domain}     & 83.37    &28.06& 11.36 \\
 BTDA~\cite{chen2019blending}  & 78.23  & 28.53    & 8.30  \\
  JiGen~\cite{carlucci2019domain}   &81.94   & 31.75 & 10.57\\
 Rot~\cite{xu2019self}   &76.02  & 34.37 & 7.55\\
 MAXL~\cite{liu2019self} & 81.02   & 28.26 & 11.05\\
 mDA~\cite{mancini2018boosting} &80.19  & 28.91 & 10.90\\
 CmDA~\cite{mancini2019discovering} &83.57  & 27.98 & 11.79\\
 \hline
 Ours &  \textbf{88.52}   & \textbf{22.74}&  \textbf{22.31}\\
  Ours-Meta &   {85.41}   &  {25.03}&  {15.61}\\
    \bottomrule
  \end{tabular}
\end{table}

\subsection{Face Recognition}

We use the aligned, cropped and scaled version of the CelebA data set~\cite{liu2015deep} with image-size of $64\times 64$. Liang~\etal~\cite{liang2019additive}  chose the \textit{Eyeglasses} attribute as the domain-difference, selected individuals with at least $20$ images, and balanced the data set such that $\#(Eyeglasses=0)/\#(Eyeglasses=1)\in[3/7,7/3]$, resulting in $211$ individuals.
Half of the individuals wear glasses only during training, while the other half wear glasses only during testing. 
Table~\ref{tab:result_celebA_compare} shows the comparisons conducted on CelebA. We achieve consistent results with those in Table~\ref{tab:result_basic_compare}. Our methods significantly outperform the baseline methods without domain-label supervision and most baseline methods supervised by extra domain labels, which demonstrates the effectiveness and superiority of our methods. {Note that our method even outperforms the best (AAL-UA) of the methods with domain-label supervision.}

\subsection{Authentication on Mobile Devices}\label{subsubsec:PIE}

We use the mobile data set built by Liang~\etal~\cite{liang2019additive} who collect smart-phone sensor information from 29 subjects, which records two-second time-series data from multiple sensors, such as accelerometer, gyroscope, gravimeter, \etc. They extracted statistical features from both time and spectrum domains, resulting in 5144 data samples with the feature dimension of 191. They treated the OS types (IOS/Android) as the domain-difference and constructed a biased learning task, as shown in Table~\ref{tab:mobile_problem}. The results are reported in Table~\ref{tab:result_mobile_compare}, in which our methods still achieve consistent results. We can see that our methods significantly outperform the baseline methods without domain-label supervision and most baseline methods supervised by domain labels.
On this data set, our meta-learning method outperforms our baseline method, which demonstrates the effectiveness of our extended framework.
Note that the results of MCD based methods do not show significant negative transfer here, we conjecture that it is because over-fitting the domain-difference in this data set (OS type) is not very straightforward, compared with learning to recognize individuals. We will also discuss this phenomenon in Section~\ref{sec:discussion}.
 
\begin{table}

  \caption{The authentication problem on mobile devices. The numbers in the first row indicate groups of subjects. ``$ \times$'' means there are no data for this condition.}\label{tab:mobile_problem}
  \centering
  \begin{tabular}{ccccc}
    \toprule
      & No. 1-6 & No. 7-12 & No. 13-15&No. 16-29\\
     \midrule
 IOS & Train   & Test & $ \times$ & Train\\ 
Android & Test  & Train  & Train&  $ \times$\\
    \bottomrule
  \end{tabular}
  \end{table}

 \begin{table} 
  \caption{Performances (\%) comparison on the Mobile data set. ``$^{*}$'' denotes the methods that use \emph{extra} domain labels for training.}\label{tab:result_mobile_compare}
  \centering
  \begin{tabular}{lccc}
    \toprule
     Methods  & aAUC  & aBFR & ACC@1 \\
     \midrule
 Direct & 76.53    & 28.64& 3.79  \\\hline
 $^{*}$RevGrad~\cite{ganin2014unsupervised}     & 75.88   &32.38& 0.38 \\
 $^{*}$ABS-Net~\cite{lu2018attribute} &76.58     & 28.09& 5.13\\
 $^{*}$SE-GZSL~\cite{verma2018generalized}  & 78.83   &26.12& 20.54  \\
 $^{*}$CDRD~\cite{liu2017detach} & 89.17    &20.26& 46.05 \\
 $^{*}$AAL-UA~\cite{liang2019additive} &   {93.40}    &  {13.59}&   {46.37}\\
 \hline
 MCD~\cite{saito2018maximum}     &83.12    & 21.37& 24.35  \\
 MCD-SWD~\cite{lee2019sliced}     &84.49    & 19.89& 25.35  \\
 ML-VAE~\cite{bouchacourt2017multi}     &77.16    & 27.18& 4.68  \\
 DADA~\cite{peng2019domain}     &77.77   & 27.87& 22.40  \\
 BTDA~\cite{chen2019blending}     &86.96    & 17.85& 30.21  \\
 MAXL~\cite{liu2019self} & 77.02   & 27.26 & 5.05\\
 mDA~\cite{mancini2018boosting} &81.43  & 26.38 & 18.80\\
 CmDA~\cite{mancini2019discovering} &82.22  & 21.84 & 20.42\\\hline
 Ours &   {90.72}    &  {16.04}&   {35.49}\\
  Ours-Meta &  \textbf{91.55}    & \textbf{14.87}&  \textbf{35.84}\\
    \bottomrule
  \end{tabular}
 \end{table}

 
  
\subsection{Ablative Study}\label{subsec:ablation}
 
We conduct a series of ablation experiments on the three data sets mentioned above to demonstrate how the heterogeneous predictive relationship discovery and the double-space domain-difference elimination mechanisms contribute to the performance. 
Specifically, we compare the following four model variants of our method.

\textit{Single-Space}. We learn and unify latent domains in the class-dependent space only, \ie, the branches of networks for the class-independent space are removed, \ie, $G_{ci},R_{g,ci},\{R_{l,ci}^r\}_{r=1}^k$ and $D_{ci}$.


\textit{Feature-Based}. We learn and unify latent domains based on features only, \ie, local recognition networks $\{R_{l,cd}^r\}_{r=1}^k$ and $\{R_{l,ci}^r\}_{r=1}^k$ are removed.

 
\textit{Class-Confuse}. No latent-domain discovery and unification. But we still learn the class-independent space. Specifically, $\{R_{l,ci}^r\}_{r=1}^k,D_{cd},\{R_{l,ci}^r\}_{r=1}^k$ and $D_{ci}$ are removed.


\textit{No-Unification}. 
We discover latent domains but do not unify domains. Instead, for a testing sample, we select the recognition model from the most relevant domain to make recognition. 

Specifically, $R_{g,cd},G_{ci},R_{g,ci},\{R_{l,ci}^r\}_{r=1}^k$ and $D_{ci}$ are removed. We only use $\{R_{l,cd}^r\}_{r=1}^k$ and $D_{cd}$ to make recognition for the $i$th sample: 
\begin{equation}\label{eq:opt_pos}\small
    \hat{j} = \argmax_j \sum_{r=1}^kp(z^i=r \mid \f_{cd}^i,D_{cd})p(y^i=j \mid \f_{cd}^i,R_{l,cd}^{r}).
\end{equation}

  
\setlength{\tabcolsep}{2pt}
\begin{table}[!t]\small
  \caption{Performances (\%) comparison on the C-MNIST data set for different variants of our method.}\label{tab:result_ab_basic_compare}
  \centering
  \begin{tabular}{lccc}
    \toprule
      Methods  & aAUC  & aBFR & ACC@1 \\
     \midrule
 Ours  &  \textbf{93.46}  & \textbf{13.26} & \textbf{60.63} \\\hline
 Single-Space & 86.51   &21.81& 39.76 \\
 Feature-Based   &  83.80    & 23.33&35.95 \\
 Class-Confuse     & 77.48  &28.54 & 24.22 \\
 No-Unification & 50.64  & 57.93  & 0.02 \\\hline
 Direct & 78.67     & 26.32& 20.88\\
    \bottomrule
  \end{tabular}
\end{table}
 
\setlength{\tabcolsep}{2pt}
\begin{table}[!t]\small
  \caption{Performances (\%) comparison on the CelebA data set for different variants of our method.}\label{tab:result_ab_celeba_compare}
  \centering
  \begin{tabular}{lccc}
    \toprule
      Methods  & aAUC  & aBFR & ACC@1 \\
     \midrule
  Ours &  \textbf{88.52}   & \textbf{22.74}&  \textbf{22.31} \\\hline
 Single-Space & 85.67   &24.05& 16.29 \\
 Feature-Based   &  86.32    & 23.88&16.35 \\
 Class-Confuse     & 75.63  &35.61 & 8.19 \\
 No-Unification & 74.49  & 33.90  & 7.86 \\\hline
 Direct & 78.74    & 41.58& 11.49\\
    \bottomrule
  \end{tabular}
\end{table}
 
\setlength{\tabcolsep}{2pt}
\begin{table}[!t]\small
  \caption{Performances (\%) comparison on the Mobile data set for different variants of our method.}\label{tab:result_ab_moblie_compare}
  \centering
  \begin{tabular}{lccc}
    \toprule
      Methods  & aAUC  & aBFR & ACC@1 \\
     \midrule
  Ours &   \textbf{90.72}    &  \textbf{16.04}&   \textbf{35.49} \\\hline
 Single-Space & 88.42   &17.70& 33.07 \\
 Feature-Based   &  86.33    & 18.57&25.09 \\
 Class-Confuse     & 75.13   &30.19 & 5.55 \\
 No-Unification & 72.36  & 34.71  & 4.06 \\\hline
 Direct & 76.53    & 28.64& 3.79\\
    \bottomrule
  \end{tabular}
\end{table}

The results are presented in Table~\ref{tab:result_ab_basic_compare}$\sim$\ref{tab:result_ab_moblie_compare}. 
It is notable that Feature-Based's performances drastically decrease comparing with our best scores. The results demonstrate that using the class label to model the predictive relationship $p(y\mid \x)$ can harness more information, make a more accurate latent-domain discovery, and result in better class-alignment across latent domains. Besides, Our method outperforms Single-Space significantly, which shows that it is more robust to eliminate domain-difference both in the class-dependence and class-independence spaces. Liang~\etal~\cite{liang2019additive} also finds the effectiveness of such a multi-branch structure.  Moreover, the results of Class-Confuse show that learning features in the class-independence space itself cannot contribute to cross-domain recognition, but rather slightly compromises the recognition performances comparing with the Direct method, because after all, its objective function is contrary to that of recognition. Lastly, the results of No-Unification are significantly worse than the Direct method, especially for the C-MNIST data set. It demonstrates that for our GCLDR problem when we only discover latent domains but do not unify them, a testing sample can only find a poorly trained recognition model from its domain, especially when latent domains are easy to discover. 


\section{Discussion}\label{sec:discussion}  
 
In Table~\ref{tab:result_basic_compare} and~\ref{tab:result_celebA_compare},  the results of MCD~\cite{saito2018maximum} and its updated version MCD-SWD~\cite{lee2019sliced} show severe negative transfer. We conjecture that it is because MCD admits a ``global'' optimization objective: its two classifiers are required to classify all the samples and classes in training data. MCD first (Step 1) trains the two classifiers such that they are different but still required to correctly recognize all the samples and classes. Then (Step 2) MCD trains the feature-extractors to minimize the output-distribution discrepancy of the two classifiers. In domain-adaptation problems, Step 1 and 2 perform on the source and target domains, respectively, thus are not necessarily contradictory. However, in our GCLDR problem, the two steps can only perform on the same training data. Therefore, it is possible that when the two classifiers are significantly different, the feature-extractors will achieve a trivial solution: the outputs (class prediction probabilities) of both classifiers are equal to be $1/c$ for each class and every sample, where $c$ is the number of class. We checked the classifier outputs of MCD and MCD-SWD for the C-MNIST and CelebA data sets (where negative transfer occurs), and truly found that they achieved such a trivial solution. Moreover, we conjecture that when domain-difference is easier to learn than class-difference, since domain-difference enlarges class-difference in our GCLDR problem, the two classifiers are easier to be significantly different, thus the negative transfer will be more likely to happen. We think it is the reason why the negative transfer happened only on the C-MNIST and CelebA data sets, because color and eyeglasses are easier to discriminate than digits and individuals, respectively. By contrast, our methods learn $k$ classifiers ``locally'' to recognize the samples/classes in each domain only, thus will mainly concentrate on class-difference instead of domain-difference, which may help to avoid the objective-contradiction and the negative transfer.

\section{Conclusion}\label{sec:conclusion}
In this paper, we investigate a generalized cross-latent-domain recognition problem in the field of authentication where domain labels are absent, and domains do not share classes. We recognize the class for unseen $\langle$class, domain$\rangle$ combinations of data. We propose an end-to-end domain agnostic method to tackle the problem. We build a heterogeneous predictive-relationship discovery and unification mechanism to discover and unify latent domains successively. Besides, we build a double-space domain-difference elimination mechanism to eliminate domain-difference in both class-dependent and class-independent spaces to improve robustness of elimination. We also extend our method into a meta-learning framework as an alternative elimination approach. The experiments demonstrate that our method significantly outperforms existing state-of-the-art methods. We also conduct an ablation study to demonstrate the effectiveness of the critical components of our method. 
Some interesting future directions of research include developing transfer learning algorithms flexible to emerging types of domain-difference.

{



\renewcommand{\theequation}{\thesection.\arabic{equation}}
\renewcommand{\thefigure}{\thesection.\arabic{figure}}
\renewcommand{\thepage}{\thesection.\arabic{page}}
\renewcommand{\thesection}{\thesection.\arabic{section}}
\renewcommand{\thetable}{\thesection.\arabic{table}}

\appendices

\section{Derivation Details of Posterior Probabilities and EM Procedures}\label{sec:post_and_em}

Recall that by Eq.~(1), $k$ local recognition networks $R_{l}^{1},\ldots,R_{l}^{k}$ aim to learn
\begin{equation}\label{eq:cond_supp}\small
p(y^i \mid \f^i,R_{l}^{r}) = \prod_{j=1}^cp(y^i=j \mid \f^i,R_{l}^{r})^{I(y^i=j)},r\in[k],
\end{equation}
and the domain-discrimination network $D$ aims to learn $p(z^i=r\mid \f,D)$.
Denote by $\theta = \{ \{R_l^{r}\}_{r=1}^k,D \}$.
Then by the Bayes' rule, we have 
\begin{equation}\label{eq:post_supp}\small
\begin{split}
p(&   z^i=r \mid y^i,\f^i,\{R_l^{r}\}_{r=1}^k,D)
=p(   z^i=r \mid y^i,\f^i,\theta) \\
&= \frac{p(   z^i=r, y^i \mid \f^i,\theta)}{
p(    y^i \mid \f^i,\theta)
} \\
&= \frac{p(   z^i=r, y^i \mid \f^i,\theta)}{
\sum_{r'=1}^kp(  z^i=r',  y^i \mid \f^i,\theta)
} \\
&= \frac{p(   z^i=r \mid \f^i,\theta)p(   y^i \mid z^i=r,  \f^i,\theta)}{
\sum_{r'=1}^kp(  z^i=r'  \mid \f^i,\theta)
p(   y^i \mid  z^i=r', \f^i,\theta)
} \\
&= \frac{p(   z^i=r \mid \f^i,D)p(   y^i \mid z^i=r,  \f^i,\{R_l^{r}\}_{r=1}^k)}{
\sum_{r'=1}^kp(  z^i=r'  \mid \f^i,D)
p(   y^i \mid  z^i=r', \f^i,\{R_l^{r}\}_{r=1}^k)
} \\
&= \frac{p(   z^i=r \mid \f^i,D)p(   y^i \mid   \f^i,R_l^{r})}{
\sum_{r'=1}^kp(  z^i=r'  \mid \f^i,D)
p(   y^i \mid    \f^i,R_l^{r'})
} \\
 &= \frac{p( z^i=r \mid \f^i,D)\prod_{j=1}^cp(y^i=j \mid \f^i,R_l^{r})^{I(y^i=j)}}{\sum_{r'=1}^kp(z^i=r' \mid \f^i,D)\prod_{j=1}^cp(y^i=j \mid \f^i,R_l^{r'})^{I(y^i=j)}}.
\end{split}
\end{equation}
The third equation is due to the law of total probability.
The fifth equation is because that given $\f^i$,  the prediction for $z^i=r$ is related to $D$ only, and given $\f^i$ and $z^i=r$, the prediction for $y^i$ is related to $\{R_l^{r}\}_{r=1}^k$ only. The sixth equation is because that given $z^i=r$, the prediction for $y^i$ is related to $R_l^{r}$ only. The last equation is the result of Eq.~\eqref{eq:cond_supp}.

We follow the Expectation-Maximization (EM)~\cite{dempster1977maximum} scheme to solve the problem. For each $i=[b]$, define $(\delta^{i,1},\ldots,\delta^{i,k})$ be a set of latent indicator variables, where $\delta^{i,r} = 1$ if the $i$th sample $(y^i,\f^i)$ belongs to the $r$th latent domain and $\delta^{i,r} = 0$ otherwise. So $\sum_{r=1}^k\delta^{i,r}=1$, $\forall i$. These indicators are not observed since the domain labels of the samples are unknown. Let $\delta$ denote the collection of all the indicator variables. By treating $\delta$ as missing, the EM algorithm proceeds by iteratively optimizing the conditional expectation of the complete log-likelihood criterion.

The complete likelihood is given by
\begin{equation}\label{eq:complete_ll_supp}\small
\begin{split}
 \prod_{i=1}^b\prod_{r=1}^k\prod_{j=1}^c[
p( z^i=r \mid \f^i,D)p(y^i=j \mid \f^i,R_l^{r})
]^{\delta^{i,r}I(y^i=j)}.
\end{split}
\end{equation}
Then the complete log-likelihood is given by
\begin{equation}\label{eq:complete_logll_supp}\small
\begin{split}
&\ell_c(\theta\mid \{(y^i,\f^i)\}_{i=1}^b,\delta) \\
&=  \sum_{i=1}^b\sum_{r=1}^k\sum_{j=1}^c
\delta^{i,r}I(y^i=j)\log
p( z^i=r \mid \f^i,D)\\
&+\sum_{i=1}^b\sum_{r=1}^k\sum_{j=1}^c
\delta^{i,r}I(y^i=j)\log
p(y^i=j \mid \f^i,R_l^{r}),
\end{split}
\end{equation}
where $\theta = \{ \{R_l^{r}\}_{r=1}^k,D \}$.
The conditional expectation of the complete negative log-likelihood is then given by
\begin{equation}\label{eq:complete_Q_supp}\small
\begin{split}
Q(\theta\mid \theta') =  - \mathbb{E}[\ell_c(\theta\mid \{(y^i,\f^i)\}_{i=1}^b,\delta)\mid\{(y^i,\f^i)\}_{i=1}^b,\theta']/b,
\end{split}
\end{equation}
It is easy to show that deriving $Q(\theta\mid \theta')$ boils down to the computation of $\mathbb{E}[\delta_{i,r}\mid\{(y^i,\f^i)\}_{i=1}^b,\theta']$, which admits an explicit form.

The EM algorithm proceeds as follows. 

\noindent\textbf{E-Step:} Given $\theta' = \{ \{R_l^{r}\}_{r=1}^k,D \}$ computed by the last step of optimization, compute
\begin{equation}\label{eq:complete_rho_supp}\small
\begin{split}
\rho^{i,r}&=\mathbb{E}[\delta_{i,r}\mid\{(y^i,\f^i)\}_{i=1}^b,\theta']\\
&=p(   z^i=r \mid y^i,\f^i,\theta')\\
&=p(   z^i=r \mid y^i,\f^i,\{R_l^{r}\}_{r=1}^k,D)=\\
&\frac{p( z^i=r \mid \f^i,D)\prod_{j=1}^cp(y^i=j \mid \f^i,R_l^{r})^{I(y^i=j)}}{\sum_{r'=1}^kp(z^i=r' \mid \f^i,D)\prod_{j=1}^cp(y^i=j \mid \f^i,R_l^{r'})^{I(y^i=j)}},
\end{split}
\end{equation}
where the last equation is the result of Eq.~\eqref{eq:post_supp}.

\noindent\textbf{M-Step:} Minimize $Q(\theta\mid \theta')$
\begin{equation}\label{eq:complete_Qwithrho_supp}\small
\begin{split}
Q(\theta\mid \theta')
&=  -\frac{1}{b}\sum_{i=1}^b\sum_{r=1}^k\sum_{j=1}^c
\rho^{i,r}I(y^i=j)\log
p( z^i=r \mid \f^i,D)\\
&-\frac{1}{b}\sum_{i=1}^b\sum_{r=1}^k\sum_{j=1}^c
\rho^{i,r}I(y^i=j)\log
p(y^i=j \mid \f^i,R_l^{r}),
\end{split}
\end{equation}

a) optimize the recognition model for each domain by:
\begin{equation}\label{eq:opt_R_supp}\small
\begin{split}
&-\frac{1}{b}\sum_{i=1}^b\sum_{r=1}^k\sum_{j=1}^c
\rho^{i,r}I(y^i=j)\log
p(y^i=j \mid \f^i,R_l^{r})\\
&=-\frac{1}{b}\sum_{i=1}^b\sum_{r=1}^k\rho^{i,r}\sum_{j=1}^c
I(y^i=j)\log
p(y^i=j \mid \f^i,R_l^{r}).
\end{split}
\end{equation}
Eq.~\eqref{eq:opt_R_supp} corresponds to Eq.~(3).

b) optimize the concordance between the feature-based discriminative probability $p( z^i=r \mid \f^i,D)$ and the posterior probability $\rho^{i,r}$ by:
\begin{equation}\label{eq:opt_D_supp}\small
\begin{split}
&-\frac{1}{b}\sum_{i=1}^b\sum_{r=1}^k\sum_{j=1}^c
\rho^{i,r}I(y^i=j)\log
p( z^i=r \mid \f^i,D)\\
&=-\frac{1}{b}\sum_{i=1}^b\sum_{r=1}^k
\rho^{i,r}\log
p( z^i=r \mid \f^i,D)\sum_{j=1}^cI(y^i=j)\\
&=-\frac{1}{b}\sum_{i=1}^b\sum_{r=1}^k
\rho^{i,r}\log
p( z^i=r \mid \f^i,D).
\end{split}
\end{equation}
Eq.~\eqref{eq:opt_D_supp} corresponds to Eq.~(4).

\section{Proof of Observation~\ref{th:meta}}\label{sec:proof_meta}
\begin{proof}
By the first order Tyler expansion, we have
\begin{equation}
\begin{split}
    \calL_{meta}=& \frac{\gamma}{2}\biggl[\frac{1}{|\calS_1|}\sum_{r\in\calS_1}\calL_{s}(r;\theta-\alpha\nabla_2) \\
    &+\frac{1}{|\calS_2|}\sum_{r\in\calS_2}\calL_{s}(r;\theta-\alpha\nabla_1) 
    \biggr]\\
    \approx& 
    \frac{\gamma}{2}\biggl[\frac{1}{|\calS_1|}\sum_{r\in\calS_1}\calL_{s}(r;\theta)-\alpha(\nabla_{\theta}\calL_{s}(r;\theta))\trans\nabla_2 \\
    &+\frac{1}{|\calS_2|}\sum_{r\in\calS_2}\calL_{s}(r;\theta)-\alpha(\nabla_{\theta}\calL_{s}(r;\theta))\trans\nabla_1
    \biggr]\\
    =&\frac{\gamma}{2}\biggl[\frac{1}{|\calS_1|}\sum_{r\in\calS_1}\calL_{s}(r;\theta)-\alpha\nabla_1\trans\nabla_2 \\
    &+\frac{1}{|\calS_2|}\sum_{r\in\calS_2}\calL_{s}(r;\theta)-\alpha\nabla_2\trans\nabla_1
    \biggr]\\
    =&\frac{\gamma}{2}\biggl[\frac{1}{|\calS_1|}\sum_{r\in\calS_1}\calL_{s}(r)+\frac{1}{|\calS_2|}\sum_{r\in\calS_2}\calL_{s}(r)
    \biggr] - \gamma\alpha\nabla_1\trans\nabla_2.
\end{split}
\end{equation}
 
\end{proof}

}

%


%
%
\ifCLASSOPTIONcaptionsoff
\newpage
\fi

{\small
\bibliographystyle{abbrvnat}
\bibliography{sigproc} %
}
 
\begin{IEEEbiography}[{\includegraphics[width=1in,height=1.25in,clip,keepaspectratio]{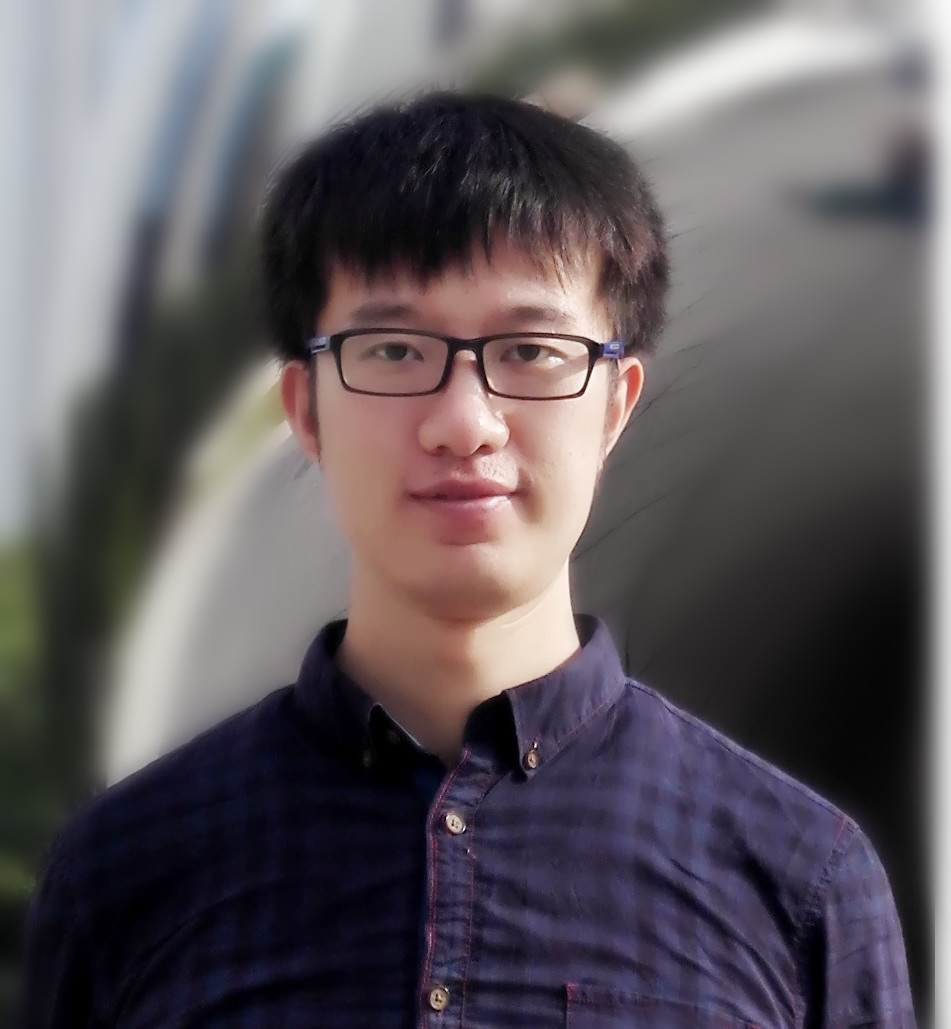}}]{Jian Liang}
Jian Liang received his Ph.D. degree from Tsinghua University, Beijing, China, in 2018. During 2018 and 2020 he was a senior researcher in the Wireless Security Products Department of the Cloud and Smart Industries Group at Tencent, Beijing. In 2020 he joined the AI for international Department, New Retail Intelligence Engine, Alibaba Group as a senior algorithm engineer. His paper received the Best Short Paper Award in 2016 IEEE International Conference on Healthcare Informatics (ICHI).
\end{IEEEbiography}

\vspace{-2em}

\begin{IEEEbiography}[{\includegraphics[width=1in,height=1.25in,clip,keepaspectratio]{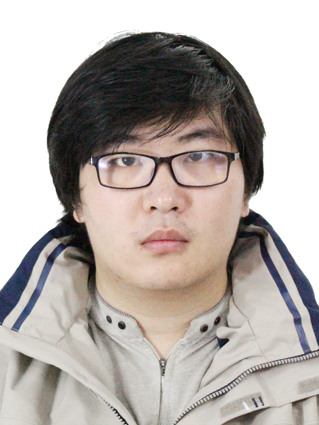}}]{Yuren Cao}
Yuren Cao received his master degree from University of Electronic Science and Technology of China, Chengdu, China,  in 2019.
He is currently a researcher in the Wireless Security Products Department of the Cloud and Smart Industries at Tencent. 
His research interests include machine learning, deep learning and data mining.
\end{IEEEbiography}

\vspace{-2em}

\begin{IEEEbiography}[{\includegraphics[width=1in,height=1.25in,clip,keepaspectratio]{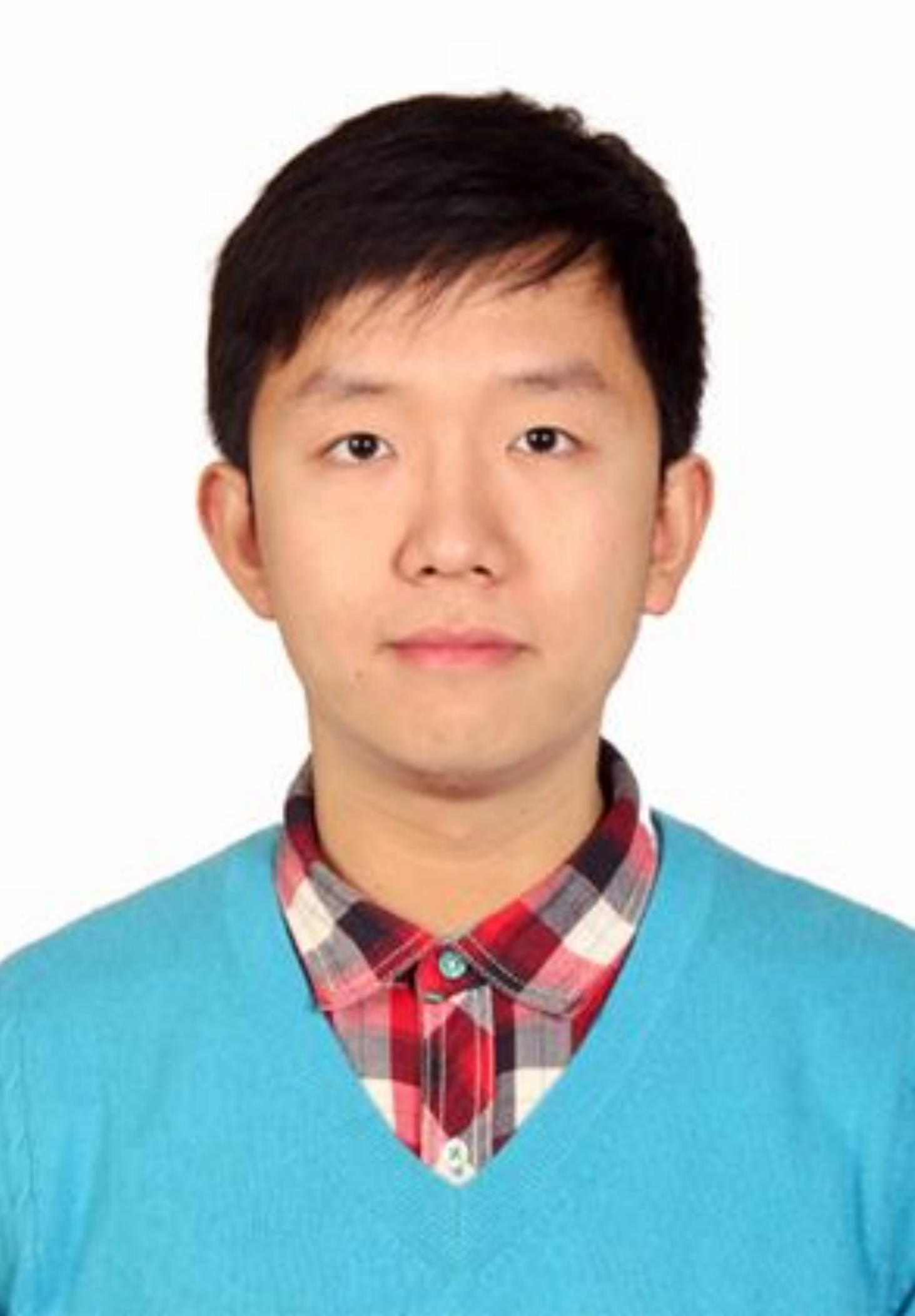}}]{Shuang Li}Shuang Li received the Ph.D. degree in control science and engineering from the Department of Automation, Tsinghua University, Beijing, China, in 2018. He was a Visiting Research Scholar with the Department of Computer Science, Cornell University, Ithaca, NY, USA, from November 2015 to June 2016. He is currently an Assistant Professor with the school of Computer Science and Technology, Beijing Institute of Technology, Beijing. His main research interests include machine learning and deep learning, especially in transfer learning and domain adaptation.
\end{IEEEbiography}

\vspace{-2em}

\begin{IEEEbiography}[{\includegraphics[width=1in,height=1.25in,clip,keepaspectratio]{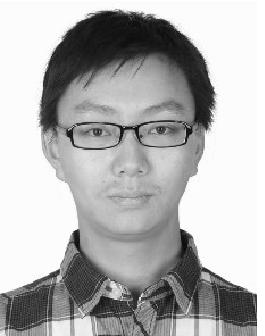}}]{Bing Bai}
Bing Bai received his B.S. and Ph.D. degrees in control theory and application from Tsinghua University, China, in 2013 and 2018 respectively, and he is currently a senior researcher with the Cloud and Smart Industries Group, Tencent, Beijing, China. His research interests include natural language processing and recommender systems.
\end{IEEEbiography}

\vspace{-2em}

\begin{IEEEbiography}[{\includegraphics[width=1in,height=1.25in,clip,keepaspectratio]{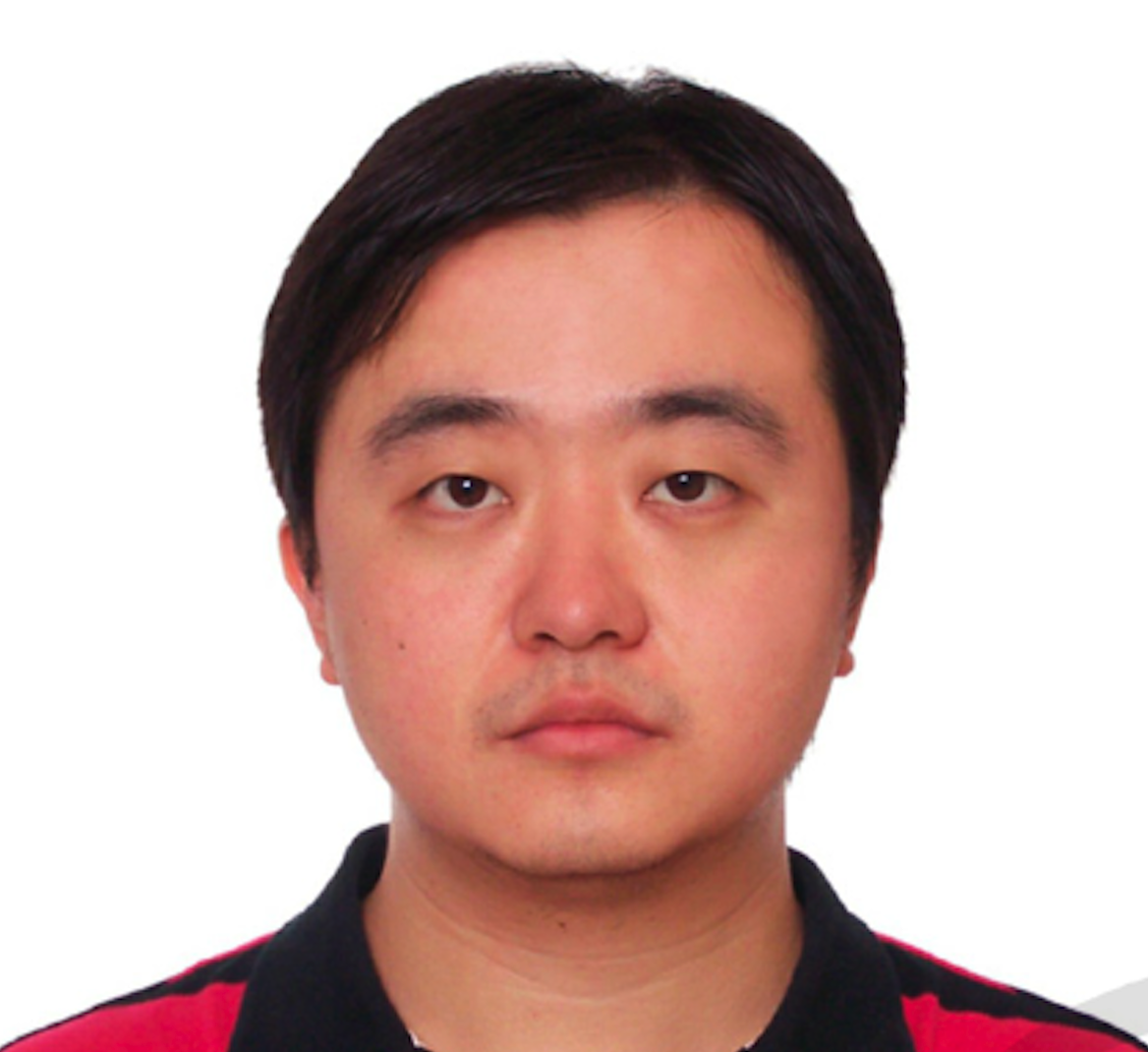}}]{Hao Li}
Hao Li is a principle researcher and engineer at Tencent. Hao’s research centers on making systems and data reliable, secure, and efficient. Recently he’s been focusing on data security and privacy by distributed and decentralized cross-silo/cross-device federated learning, and privacy-preserving machine learning by program analysis and secure multi-party computation. 
\end{IEEEbiography}
 
\vspace{-2em}

\begin{IEEEbiography}[{\includegraphics[width=1in,height=1.25in,clip,keepaspectratio]{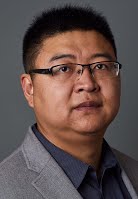}}]{Fei Wang}
Fei Wang is an Associate Professor in Division of Health Informatics, Department of Population Health Sciences, Weill Cornell Medicine, Cornell University. His major research interest is data mining, machine learning and their applications in health data science. He has published more than 200 papers on the top venues of related areas such as ICML, KDD, NeurIPS, AAAI, JAMA Internal Medicine, Annals of Internal Medicine, etc. His papers have received over 12,500 citations so far with an H-index 55. His (or his students’) papers have won 7 best paper (or nomination) awards at international academic conferences. His team won the championship of the NIPS/Kaggle Challenge on Classification of Clinically Actionable Genetic Mutations in 2017 and Parkinson's Progression Markers' Initiative data challenge organized by Michael J. Fox Foundation in 2016. Dr. Wang is the recipient of the NSF CAREER Award in 2018, as well as the inaugural research leadership award in IEEE International Conference on Health Informatics (ICHI) 2019. Dr. Wang is the chair of the Knowledge Discovery and Data Mining working group in American Medical Informatics Association (AMIA). Dr. Wang frequently serves as the program committee chair, general chair and area chair at international conferences on data mining and medical informatics. Dr. Wang is on the editorial board of several prestigious academic journals including Scientific Reports, IEEE Transactions on Neural Networks and Learning Systems, Data Mining and Knowledge Discovery, etc.
\end{IEEEbiography}

\vspace{-2em}

\begin{IEEEbiography}[{\includegraphics[width=1in,height=1.25in,clip,keepaspectratio]{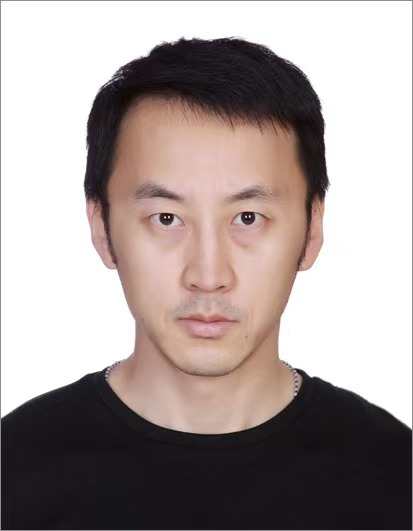}}]{Kun Bai}
Kun Bai is the Director of Cloud \& Smart Industries Group in Tencent. 
Before joining Tencent, he was Research Staff Member and Manager in IBM TJ Watson Research and was responsible for developing and leading advanced research for IBM Watson Health and IBM Watson Cloud. He earned a Ph.D. in Information Science \& Technologies from Pennsylvania State University. He is a senior member of IEEE.
\end{IEEEbiography}
%

\end{document}